\journalname{Neural Computing and Applications. 2021.\\ Vol. 33 (16). P. 10189-10198.}
\begin{document}

\title{Multi-Valued Neural Networks I}
\subtitle{A Multi-Valued Associative Memory}

\author{Dmitry Maximov \and Vladimir I. Goncharenko \and Yury S. Legovich}

\institute{Dmitry Maximov \at
              Trapeznikov Institute of Control Science Russian Academy of Sciences,\\
              65 Profsoyuznaya str, Moscow \\
              Tel.: +7-909-913-81-55\\
              \email{jhanjaa@ipu.ru, dmmax@inbox.ru}            \\
\and
           Vladimir I. Goncharenko \at
              Trapeznikov Institute of Control Science Russian Academy of Sciences,\\
              65 Profsoyuznaya str, Moscow \\
              Tel.: +7-495-334-87-21 \\
              \email{vladimirgonch@mail.ru} \\
              \and
              Yury S. Legovich \at
              Trapeznikov Institute of Control Science Russian Academy of Sciences,\\
              65 Profsoyuznaya str, Moscow \\
              Tel.: +7-495-334-87-21 \\
              \email{legov@ipu.ru}
}

\date{Received: date / Accepted: date}

\maketitle

\begin{abstract}
A new concept of a multi-valued associative memory is introduced, generalizing a similar one in fuzzy neural networks. We expand the results on fuzzy associative memory with thresholds, to the case of a multi-valued one: we introduce the novel concept of such a network without numbers, investigate its properties, and give a learning algorithm in the multi-valued case. We discovered conditions under which it is possible to store given pairs of network variable patterns in such a multi-valued associative memory. In the multi-valued neural network, all variables are not numbers, but elements or subsets of a lattice, i.e., they are all only partially-ordered. Lattice operations are used to build the network output by inputs. In this paper, the lattice is assumed to be Brouwer and determines the implication used, together with other lattice operations, to determine the neural network output. We gave the example of the network use to classify aircraft/spacecraft trajectories.

\keywords{multi-valued neural networks \and fuzzy neural networks \and associative memory \and linguistic variable lattice}

\subclass{68Q85}
\end{abstract}

\section{Introduction}
\label{intro}
Feedforward fuzzy neural networks in which the internal operations
based on fuzzy operations of joins and meets $\vee - \wedge$, were proposed in \cite{kosko}.
Such structures are called fuzzy associative memory, and fuzzy information in them is
represented by the elements of the numeric interval [0, 1]. These networks are used for storing and classifying of fuzzy patterns. More complicated variants of fuzzy networks use fuzzy numbers as weights and variables. The most significant use of such networks lies in the field of approximation analysis and image restoration \cite{epdf}. Also, there is a self-organizing fuzzy network which reorganize the model and adapt itself to a changing environment \cite{br}.

However, often, it is preferable to operate not with number representation of information, but with linguistic expressions directly. Such a situation occurs, e.g., in decision making systems \cite{liter} and cognitive maps.
For example, when fuzzy assessing typical flight situations, a set of linguistic variables and their degrees of significance/certainty, estimate the emerging state. Here, confidence levels take, in turn, values in a numerical interval, usually [0,1]. However, a problem arises here: the methods for determining fuzziness are subjective --- experts assess the degree of fuzziness that implies some uncertainty.

In \cite{maximov19} it was demonstrated that it is not necessary to use numbers in such a situation: linguistic variables (but already partially-ordered, as it mentioned in \cite{liter}), which do not require a mandatory numerical evaluation, can serve again as assessments. The situation itself determines such estimations that do not need an expert opinion or, at least, the expert's evaluation of the situation is greatly facilitated.

In this case, to compare the valuations and obtain control solutions in \cite{maximov19}, the concepts of not fuzzy logic are used, but multi-valued, in which the scale of truth values is a Brouwer lattice of a general form. Such scales of truth values generalize a linearly ordered scale (in particular, in fuzzy logic) \cite{maximov16} and naturally define implication as a lattice operation.
Different variants of such multi-valued or lattice-valued logics\footnote{We use the term ``multi-valued'' for the special case of finite and non-linearly-ordered scale of truth values used in these logics.} can be found in \cite{novak}, \cite{latvallog}, \cite{manuscript}, \cite{Maximov_Ax}.

Such decision making systems can use neural networks in their processing. However, nobody has used general non-numeric multi-valued lattices for neural networks weights and data representation so far. Though, lattice-valued neural networks are known \cite{latval} as fuzzy ones generalization, variables in these networks are represented by set-valued functions with the sets are again \emph{numeric} subintervals of an interval in $\mathbb{R}$.

Thus, the main contribution of the paper is the first example of a general associative memory in which weights and data take values in a non-numeric multi-valued linguistic lattice and an application of such a network as a simple linguistic patterns classifier. However, the lattice does not have to be a linguistic variable one.
Such a lattice may be a lattice of sets or, even, a lattice of graphs of system state configurations as in the researches on a system state estimating, not by numbers, but by elements of a lattice: \cite{Maximov_17}, \cite{assa},\\ \cite{Maximov_R}, \cite{Maximov_an}.

We will use elements of a complete lattice in a multi-valued associative memory instead of elements of the interval [0, 1]. In this paper we suppose the lattice is Brouwer. The case of a residuated lattice is considered in \cite{Maximov_neuro2}.
The associative memory generalizes a fuzzy one of\\ \cite{epdf} to the case of using multi-valued logic operations (instead of fuzzy ones) with the multi-valued lattice elements.

In \cite{maximov20}, \cite{maximov20e} the \emph{simplest} case of a multi-valued associative memory without thresholds has been considered. In this paper, we expand these results (as well as \cite{epdf} on fuzzy associative memory with thresholds), to the case of a multi-valued associative memory with thresholds: we define the concept of such a network, investigate its properties, and give a generalization of the learning algorithm of fuzzy associative memory for a multi-valued case. Though, the inputs, outputs and connection weights of the network are linguistic variables, not numbers, this makes it possible to use such a network for processing control and diagnostic information of complex dynamic objects. We demonstrate this application by such a network use to classify aircraft/spacecraft trajectories.

The paper is organized as follows: in Sec. \ref{sec1} we give a brief list of definitions used in the text. In Sec. \ref{sec2} we define the multi-valued associative memory and prove some existence theorems for the solution of the equation defining the memory. In Sec. \ref{sec3} we consider a learning algorithm for the memory. In Sec. \ref{sec4} we discuss the previous results and the network computational complexity. In Sec. \ref{sec5} we give an example of such a network use as a linguistic pattern classifier. In Sec. \ref{sec6} we conclude the paper.

\section{Definitions}
\label{sec1}
\begin{definition}
A \textbf{lattice} is a partially-ordered set having, for any two elements, their exact upper bound or join $\vee$ (sup, max) and the exact lower bound or meet $\wedge$ (inf, min).
\end{definition}
\begin{definition}
The \textbf{exact upper bound} of the two elements is the smallest lattice element,
larger than both of these elements.
\end{definition}
\begin{definition}
The \textbf{exact lower bound} is dually defined as the largest element of the lattice, smaller than both the elements.
\end{definition}
\begin{definition}
A \textbf{complete lattice} is a lattice in which any two subsets have a join and a meet.
It follows from the definition that in a non-empty complete lattice, there is the biggest ``1'' and the smallest ``0'' elements.
\end{definition}
\begin{definition}
\textbf{Generators} of the lattice are called its elements, from which all the others are obtained
by join and meet operations.
\end{definition}
\begin{definition}
A lattice is called \textbf{atomic} if every two of its generators have null meets.
\end{definition}
If we take such a lattice as a scale of truth values in a multi-valued logic,
then the biggest element will correspond to complete truth (true), the smallest to complete falsehood (false), and intermediate elements will correspond to partial truth in the same way as  the elements of the segment [0,1] evaluate partial truth in fuzzy logic.

In logics, with such a scale of truth values, the implication can be determined by multiplying lattice elements, or internally, only from lattice operations.
\begin{definition}
A \textbf{Brouwer lattice} is a lattice that has internal implications.
\end{definition}
\begin{definition}\label{imp}
In such a lattice, the \textbf{implication} $c = a\Rightarrow b$ is defined as the largest $c:\;a\wedge b = a\wedge c$.
\end{definition}
\begin{definition}
The implication $\neg a = a\Rightarrow 0$ is called the \textbf{pseudo-complement} of $a$.
\end{definition}
Distribution laws for join and meet are satisfied in Brouwer lattices. The converse is valid only for finite lattices.

We will assume that the lattices used in the multi-valued neural network are complete,
distributive and finite.

\section{Multi-Valued $\vee - \wedge$ Associative Memory with Threshold}
\label{sec2}
As in the case of fuzzy associative memory \cite{epdf}, in $\vee - \wedge$ multi-valued associative memory, the transfer function normalizing outputs to the prefix range is not needed, since the $\vee - \wedge$ operators limit the outputs to the range of inputs, and $\wedge$ is also a threshold operator \\ \cite{blanco}. A $\delta$--function which activates the desired neuron in a one-layer multi-valued net computing a lattice implication, is considered in \cite{maximov20e} (see also Sec \ref{sec4}).

More complicated fuzzy neural networks, which work not with elements of $[0, 1]$, but with fuzzy numbers, and in need also of activation, use different transfer functions, from simplest to a uniform Tauber-Wiener function \cite{epdf}. However, a theory of a multi-valued analogue of fuzzy numbers might be developed in order to build complicated multi-valued neural networks with transfer functions. The idea of SPOCU function ``picking up'' the appropriate properties for activation function directly from training \cite{spocu} may be useful here (see also, Maximov D. ``Multi-Valued Cognitive Maps'').

Let us suppose, that an input signal is $x \in L^{n}$, and an output signal is $y \in L^{m}$, where $L$ is the lattice used. In this case, the input-output relationship in a two-layer multi-valued associative memory can be written as
$\textbf{y} = \textbf{x}\circ \textbf{W}$, where $\circ$ stands for the $\vee - \wedge$ composition operation and $\textbf{W} = (w_{ij})_{n\times m}\in\mu_{n\times m}$ is the $n\times m$ matrix of the weights of connections, with elements from a complete,
distributive, and finite lattice $L$, $i\in N = \{1 ... n\},\;j\in M = \{1 ... m\}$ (Fig. \ref{fig1}).
\begin{figure}\begin{center}
  \includegraphics[scale=0.7]{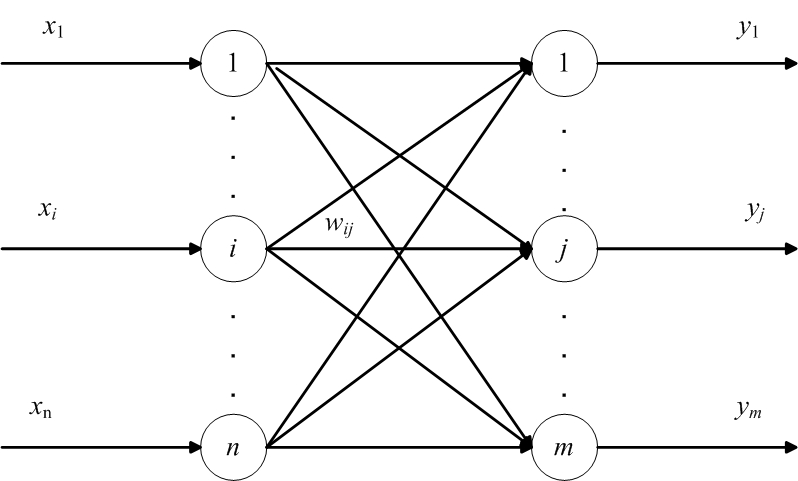}
\end{center}
\caption{Bilayer associative memory}
\label{fig1}       
\end{figure}
In \cite{maximov20e}, several simple composition options are suggested, one of them is:
\begin{equation}\label{eq1}
y_{j} = \bigvee_{i}\{x_{i}\wedge w_{ij}\}.
\end{equation}

In the theory of the fuzzy associative memory \cite{epdf}, exactly (\ref{eq1}) is considered, however, with fuzzy values and operations. There, the fuzzy operator $\vee$ generalizes the sum $\sum$ in ordinary neural networks, and the fuzzy operator $\wedge$ is used as a special case of multiplication. In multi-valued neural networks we use in (\ref{eq1}) the lattice operations $\vee$ and $\wedge$ with the same functionality, and $\wedge$ is again a special case of multiplication in residuated lattices.
More sophisticated fuzzy models use a combination of $\vee$ and $t$-norms \cite{epdf},  \cite{sussner} and a combination with implication \cite{sussner}. However, because we use not a linear-ordered segment as a set of variables, but the general Brouwer lattice without multiplying elements (in this paper), we do not consider these models.

Thus, we consider the composition \ref{eq1}, but with thresholds $c_{ij}$ and $d_{j}$ to input unit $i$ and output unit $j$ respectively (Fig. \ref{fig1}) with all variables taking values in the lattice $L$:
\begin{equation}\label{eq3}
y_{j} = (\bigvee_{i}\{(x_{i}\vee c_{ij})\wedge w_{ij}\})\vee d_{j} = \bigvee_{i}\{(x_{i}\vee c_{ij}\vee d_{j})\wedge (w_{ij}\vee d_{j})\}.
\end{equation}
In the vector notation, it can be written as:
\begin{equation}\label{eq4}
\textbf{y} = ((\textbf{x}\vee \textbf{c})\circ \textbf{W})\vee \textbf{d}.
\end{equation}
Here, all quantities are the elements (not subsets!) of the lattice $L$. Hence, no membership functions are needed unlike the fuzzy case. As with the activation function, the theory of a multi-valued analogue of fuzzy numbers should be developed in order to consider membership functions. Then, such multi-valued numbers may be used also in, e.g., quality modelling of a non-linear process similar to \cite{ar} or membership function construction as in \cite{cr}.

We denote $(X, Y) = \{\textbf{x}^{k}, \textbf{y}^{k}\;|\;k\in P\},\;P =\{1 ... p\}$ --- the family of pairs of multi-valued patterns with $\textbf{x}^{k} = (x_{1}^{k}, ..., x_{n}^{k})$ and $\textbf{y}^{k} = (y_{1}^{k}, ..., y_{m}^{k})$.
We introduce also the following sets (Fig. \ref{fig2}):

\begin{multline}\label{g}
G_{ij}(X,Y) = \{k\in P\;|\;x_{i}^{k} > y_{j}^{k}\},\; E_{ij}(X,Y) = \{k\in P\;|\;x_{i}^{k} = y_{j}^{k}\},\;\\ GE_{ij}(X,Y) = G_{ij}(X,Y)\cup E_{ij}(X,Y),\; L_{ij}(X,Y) = \{k\in P\;|\;x_{i}^{k} < y_{j}^{k}\},\;\\ LE_{ij}(X,Y) = L_{ij}(X,Y)\cup E_{ij}(X,Y),\; \\NC_{ij}(X,Y) = \{k\in P\;|\;x_{i}^{k}\not\in GE_{ij}(X,Y)\cup LE_{ij}(X,Y)\}.
\end{multline}
\begin{figure}\begin{center}
  \includegraphics[scale=1.0]{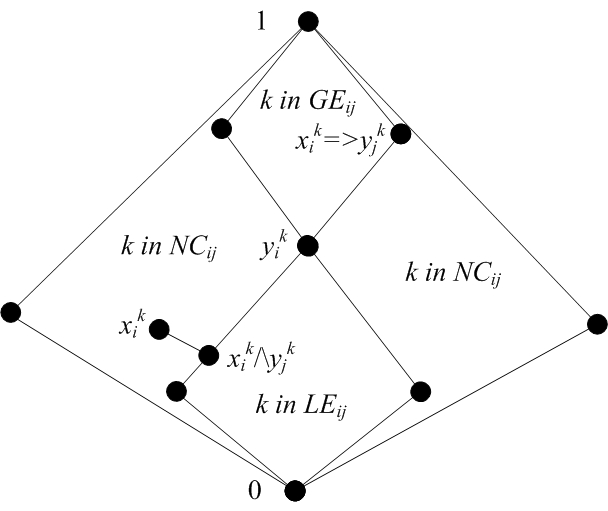}
\end{center}
\caption{Implication and areas of comparability}
\label{fig2}       
\end{figure}
The equalities define those sets of pattern pairs, for which $x$'s are grater than $y$'s (G), or less than y's (L), ... or not comparable with $y$'s (NC).

We also establish the threshold matrix
$\textbf{c}^{0} = (c^{0}_{11}, ..., c^{0}_{nm})$, and vector $\textbf{d}^{0} = (d^{0}_{1}, ..., d^{0}_{m})$, and the connection weight matrix $W_{0} = (w_{ij}^{0})_{n\times m}$ like in\\ \cite{maximov20} and in a different way from \cite{epdf}:
\begin{equation}\label{eqd0}
d_{j}^{0} = \bigwedge_{k\in P}(y_{j}^{k}),
\end{equation}
\begin{equation}\label{eqc0}c_{ij}^{0} = \left\{\begin{aligned}\bigwedge_{\substack{k\in LE_{ij}(X,Y),\\ j\in M}}(y_{j}^{k}),\;LE_{ij}(X,Y) \neq \emptyset; \\
                                    \bigwedge_{\substack{k\in P,\\ j\in M}}(y_{j}^{k}),\; LE_{ij}(X,Y) = \emptyset,
                                    \end{aligned}
                              \right.
\end{equation}
\begin{equation}\label{eq5}
w_{ij}^{0} = \bigwedge_{k\in P}(x_{i}^{k}\Rightarrow y_{j}^{k}).
\end{equation}
The input thresholds $c^{0}$'s are individual for each input-output connection. One should add an additional meet by $i$ in (\ref{eqc0}) to obtain the unique threshold for all inputs. This is the case of $LE_{ij}(X,Y) = \emptyset$ in (\ref{eqc0}). These constants  $c^{0}$'s raise to the level of outputs' intersections the inputs in patterns where $x$'s less than or equal to $y$'s. Thus, the capacity of the memory increases, since we can store more patterns with small $x$'s (Theorem \ref{th3}). Also, $d^{0}$'s adapts everything to the memory outputs.

Now, we define the sets ($i\in N,\;j\in M$):
\begin{multline}\label{Tg}
TG_{ij}((X,\textbf{c}^{0});\textbf{d}^{0},Y) = \{k\in P\;|\;x_{i}^{k}\vee c_{ij}^{0}\vee d_{j}^{0} > y_{j}^{k}\},\\
TE_{ij}((X,\textbf{c}^{0});\textbf{d}^{0},Y) = \{k\in P\;|\;x_{i}^{k}\vee c_{ij}^{0}\vee d_{j}^{0} = y_{j}^{k}\},\\
TL_{ij}((X,\textbf{c}^{0});\textbf{d}^{0},Y) = \{k\in P\;|\;x_{i}^{k}\vee c_{ij}^{0}\vee d_{j}^{0} < y_{j}^{k}\},\\
TGE_{ij}((X,\textbf{c}^{0});\textbf{d}^{0},Y) = TG_{ij}((X,\textbf{c}^{0});\textbf{d}^{0},Y)\cup TE_{ij}((X,\textbf{c}^{0});\textbf{d}^{0},Y)\}, \\
TLE_{ij}((X,\textbf{c}^{0});\textbf{d}^{0},Y) = TL_{ij}((X,\textbf{c}^{0});\textbf{d}^{0},Y)\cup TE_{ij}((X,\textbf{c}^{0});\textbf{d}^{0},Y)\},\\
TNC_{ij}((X,\textbf{c}^{0});\textbf{d}^{0},Y) = \{k\in P\;|\;k\not\in TGE_{ij}((X,\textbf{c}^{0});\textbf{d}^{0},Y)\cup \\ \cup TLE_{ij}((X,\textbf{c}^{0});\textbf{d}^{0},Y)\}, \\
TS_{ij}^{G}((\textbf{W}^{0},\textbf{d});Y) = \{k\in P:\;\mid\;x_{i}^{k}\wedge y_{j}^{k}\leqslant w_{ij}^{0}\vee d_{j}^{0}\}.
\end{multline}
The sets are similar to ones without ``$T$'' in denotation above (\ref{g}), however these ones use join of $x$'s with $c^{0}$'s and $d^{0}$'s instead of only $x$'s. The set $TS$ determine patterns for which weights $w^{0}$'s are grater than or equal to outputs,
since, $d_{j}^{0}\leqslant w_{ij}^{0}$ by (\ref{eq5}), (\ref{eqd0}) and the implication Definition \ref{imp} ($c\geqslant b$ in $c = a\Rightarrow b$):

\begin{multline*}
TS_{ij}^{G}((\textbf{W}^{0},\textbf{d});Y) = \{k\in P:\;\mid\;x_{i}^{k}\wedge y_{j}^{k}\leqslant w_{ij}^{0}\}.
\end{multline*}

\noindent Then, let us define the set
\begin{equation}\label{defM}
M^{wcd} = \{(\textbf{W},\textbf{c},\textbf{d})\;|\; \forall k\in P,\;((\textbf{x}^{k}\vee \textbf{c})\circ \textbf{W})\vee \textbf{d} = \textbf{y}^{k}\}.
\end{equation}
This is the set of the pattern pair family, thresholds, and the connection matrix, which satisfy the equation (\ref{eq3}). The following Theorem shows that weights $w$'s and thresholds $d$'s which give equation (\ref{eq3}) solutions, are bounded from above by the values of these $w^{0}$'s and $d^{0}$'s.
\newtheorem{Th}{Theorem}

\begin{Th}\label{th1}
Let $\textbf{W} = (w_{ij})_{n\times m}\in\mu_{n\times m},\; \textbf{c} = (c_{11}, ..., c_{nm})\in L^{n\times m},\; \textbf{d} = (d_{1}, ..., d_{m})\in L^{m},\; (\textbf{W},\textbf{c},\textbf{d})\in M^{wcd}.$ Then $\forall i\in N,\; j\in M,\; w_{ij}\leqslant w_{ij}^{0},\; d_{j}\leqslant d_{j}^{0}$. 
\end{Th}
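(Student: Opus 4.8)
The plan is to establish the two inequalities $d_j \leqslant d_j^0$ and $w_{ij} \leqslant w_{ij}^0$ separately, extracting each from the membership condition defining $M^{wcd}$ in (\ref{defM}): for every $k \in P$ the equation (\ref{eq3}) holds componentwise on the $k$-th pattern, i.e. $y_j^k = (\bigvee_i \{(x_i^k \vee c_{ij}) \wedge w_{ij}\}) \vee d_j$. The only nontrivial tool needed is the residuation (adjunction) property of the Brouwer implication, read off from Definition \ref{imp}: since $a \Rightarrow b$ is the largest $c$ with $a \wedge b = a \wedge c$, and that set coincides with $\{c : a \wedge c \leqslant b\}$ having the same largest element, one obtains the equivalence $a \wedge c \leqslant b \iff c \leqslant (a \Rightarrow b)$. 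I would state and briefly justify this adjunction first, since both halves of the theorem rest on it (the weight half directly, the threshold half implicitly through monotonicity).

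First I would treat the threshold bound, which requires no implication at all. The right-hand side of the defining equation is a join in which $d_j$ appears as one argument, so it dominates $d_j$; hence $y_j^k \geqslant d_j$ for every $k \in P$. As this holds for all patterns, $d_j$ is a lower bound of $\{y_j^k : k \in P\}$, and therefore $d_j \leqslant \bigwedge_{k \in P} y_j^k = d_j^0$ by (\ref{eqd0}).

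The weight bound is the heart of the argument. Fix $i$, $j$, and a pattern index $k$. Using again that the right-hand side is a join over the input index, the single term indexed by $i$ is dominated by the whole join, so $y_j^k \geqslant (x_i^k \vee c_{ij}) \wedge w_{ij}$. Since $x_i^k \vee c_{ij} \geqslant x_i^k$ and meet is monotone, this yields $x_i^k \wedge w_{ij} \leqslant y_j^k$. Applying the adjunction with $a = x_i^k$, $c = w_{ij}$, $b = y_j^k$ gives $w_{ij} \leqslant (x_i^k \Rightarrow y_j^k)$. As $k$ was arbitrary, $w_{ij}$ lies below every $x_i^k \Rightarrow y_j^k$, so $w_{ij} \leqslant \bigwedge_{k \in P}(x_i^k \Rightarrow y_j^k) = w_{ij}^0$ by (\ref{eq5}).

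I expect the main obstacle to be conceptual rather than computational: one must notice that the \emph{equality} constraint is being used only through the weaker inequality $y_j^k \geqslant x_i^k \wedge w_{ij}$, and that the thresholds $c_{ij}$ and $d_j$ only enlarge the right-hand side, so they cannot interfere with an \emph{upper} bound on $w_{ij}$ and $d_j$. The remaining subtlety is verifying that Definition \ref{imp} really supplies the residual adjunction $a \wedge c \leqslant b \iff c \leqslant (a \Rightarrow b)$; once that is in hand, both bounds follow immediately from monotonicity of $\vee$ and $\wedge$ together with the definition of the meet as greatest lower bound.
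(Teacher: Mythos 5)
Your proposal is correct and follows essentially the same route as the paper: both halves rest on extracting, from the defining equation, that $d_j$ is a lower bound of every $y_j^k$ (giving $d_j \leqslant d_j^0$) and that $x_i^k \wedge w_{ij} \leqslant y_j^k$ for every $k$, which is then converted into $w_{ij} \leqslant x_i^k \Rightarrow y_j^k$ and closed off by taking the meet over $k$. The only cosmetic difference is packaging: you justify the conversion via the explicit residuation equivalence $a \wedge c \leqslant b \iff c \leqslant (a \Rightarrow b)$ (stated slightly loosely --- the two sets of candidates do not coincide, only their largest elements do, as you in effect note), whereas the paper uses monotonicity of $\Rightarrow$ in its second argument together with $x_i^k \Rightarrow (x_i^k \wedge w_{ij}) = x_i^k \Rightarrow w_{ij} \geqslant w_{ij}$.
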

\begin{proof}
Since $(\textbf{W},\textbf{c},\textbf{d})\in M^{wcd}$, we denote in (\ref{eq4}) $(\textbf{x}\vee \textbf{c})\circ \textbf{W} = \textbf{U}$ and get $\textbf{U}\vee \textbf{d} = \textbf{y}$. Taking the meet by $k$ of this equation, we get $\bigwedge_{k\in P}(U^{k})\vee d_{j} = \bigwedge_{k\in P}(y_{j}^{k}) = d_{j}^{0}$, since $d_{j}$'s do not depend on $k$. Thus, $d_{j} \leqslant d_{j}^{0}$.

Hence next, we proof $w_{ij}\leqslant w_{ij}^{0}$. Clear, that if $b > c$, then $a\Rightarrow b \geqslant a\Rightarrow c$. Thus, $w_{ij}^{0} = \bigwedge_{k\in P}(x_{i}^{k}\Rightarrow y_{j}^{k}) = \bigwedge_{k\in P}(x_{i}^{k}\Rightarrow (\bigvee_{l}\{(x_{l}^{k}\vee c_{lj})\wedge w_{lj}\}\vee d_{j})\geqslant \bigwedge_{k\in P}(x_{i}^{k}\Rightarrow (x_{i}^{k}\wedge w_{ij})) = \bigwedge_{k\in P}(x_{i}^{k}\Rightarrow w_{ij}) \geqslant w_{ij}$. Therefore, $w_{ij}\leqslant w_{ij}^{0}$.
\end{proof}

The following Theorem continues the previous one and shows that if there are a solution of (\ref{eq3}) with thresholds $c_{ij}$ bounded from above by $c_{ij}^{0}$, then $w^{0}$'s, $c^{0}$'s, and $d^{0}$'s provide also a solution of (\ref{eq3}).
\begin{Th}\label{th2}
Let $(\textbf{W},\textbf{c},\textbf{d})\in M^{wcd}$, i.e., $M^{wcd} \neq \emptyset$, and $c_{ij}\leqslant c_{ij}^{0}$. Then $(\textbf{W}^{0},\textbf{c}^{0},\textbf{d}^{0})\in M^{wcd}$.
\end{Th}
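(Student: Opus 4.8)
The plan is to show directly that the canonical triple $(\textbf{W}^0,\textbf{c}^0,\textbf{d}^0)$ satisfies the defining equation (\ref{eq3}) for every pattern, i.e. that $\bigvee_{i}\{(x_i^k\vee c_{ij}^0)\wedge w_{ij}^0\}\vee d_j^0 = y_j^k$ for all $k\in P$, $j\in M$. As usual I would prove the two inequalities separately, treating the ``$\geqslant$'' direction by monotonicity against the assumed solution and the ``$\leqslant$'' direction summand by summand.

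For the lower bound I would use both hypotheses in full. Since $M^{wcd}\neq\emptyset$ there is a solution $(\textbf{W},\textbf{c},\textbf{d})$; by Theorem \ref{th1} it satisfies $w_{ij}\leqslant w_{ij}^0$ and $d_j\leqslant d_j^0$, and by assumption $c_{ij}\leqslant c_{ij}^0$. Because $\vee$, $\wedge$ and the composition $\circ$ are monotone, replacing $c,w,d$ by the larger $c^0,w^0,d^0$ can only increase the output, so $\bigvee_i\{(x_i^k\vee c_{ij}^0)\wedge w_{ij}^0\}\vee d_j^0 \geqslant \bigvee_i\{(x_i^k\vee c_{ij})\wedge w_{ij}\}\vee d_j = y_j^k$, the last equality holding because $(\textbf{W},\textbf{c},\textbf{d})\in M^{wcd}$. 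This is where the non-emptiness of $M^{wcd}$ and the bound $c\leqslant c^0$ enter.

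For the upper bound it suffices to bound each summand and the threshold by $y_j^k$. The term $d_j^0\leqslant y_j^k$ is immediate from (\ref{eqd0}). For $x_i^k\wedge w_{ij}^0$ I would use (\ref{eq5}): since $w_{ij}^0\leqslant x_i^k\Rightarrow y_j^k$, monotonicity of $\wedge$ and Definition \ref{imp} give $x_i^k\wedge w_{ij}^0\leqslant x_i^k\wedge(x_i^k\Rightarrow y_j^k)=x_i^k\wedge y_j^k\leqslant y_j^k$. By distributivity $(x_i^k\vee c_{ij}^0)\wedge w_{ij}^0 = (x_i^k\wedge w_{ij}^0)\vee(c_{ij}^0\wedge w_{ij}^0)$, so it remains only to show $c_{ij}^0\wedge w_{ij}^0\leqslant y_j^k$.

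This last inequality is the heart of the matter and the step I expect to be the main obstacle. When $k\in LE_{ij}$ it is immediate, since then $c_{ij}^0=\bigwedge_{l\in LE_{ij}}y_j^l\leqslant y_j^k$ by (\ref{eqc0}), and the case $LE_{ij}=\emptyset$ reduces $c_{ij}^0$ to $d_j^0$ and is trivial. The genuine difficulty is $k\notin LE_{ij}$, i.e. $x_i^k\not\leqslant y_j^k$: here $c_{ij}^0$ is a meet of outputs over patterns unrelated to $k$, and a small lattice example shows that $c_{ij}^0\wedge w_{ij}^0\leqslant y_j^k$ can actually fail when no solution exists, so this is precisely where the hypothesis $M^{wcd}\neq\emptyset$ must re-enter rather than being an intrinsic fact about $c^0,w^0$. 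I would attack it by contradiction: assuming $z:=c_{ij}^0\wedge w_{ij}^0\not\leqslant y_j^k$, the bound $z\leqslant w_{ij}^0\leqslant x_i^k\Rightarrow y_j^k$ forces $z\wedge x_i^k\leqslant y_j^k$ and hence $z\not\leqslant x_i^k$, while $z\leqslant y_j^l$ for every $l\in LE_{ij}$ and $c_{ij}\leqslant c_{ij}^0$. The plan is then to feed these relations into the equation (\ref{eq3}) for the assumed solution $(\textbf{W},\textbf{c},\textbf{d})$ evaluated at the patterns of $LE_{ij}$ and at $k$, which are coupled through the shared weight $w_{ij}$ and threshold $d_j$, forcing the solution to realise a value it cannot simultaneously produce and so contradicting $(\textbf{W},\textbf{c},\textbf{d})\in M^{wcd}$.
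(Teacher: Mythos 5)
Your overall decomposition is the same as the paper's: the ``$\geqslant$'' direction by monotonicity against the assumed solution (using Theorem \ref{th1} and $c_{ij}\leqslant c_{ij}^{0}$), and the ``$\leqslant$'' direction summand by summand, where $d_{j}^{0}\leqslant y_{j}^{k}$ and $x_{i}^{k}\wedge w_{ij}^{0}\leqslant y_{j}^{k}$ are immediate and everything hinges on $c_{ij}^{0}\wedge w_{ij}^{0}\leqslant y_{j}^{k}$. Your diagnosis of that last inequality as the real obstacle is in fact sharper than the paper's own proof, which disposes of it by asserting $(c_{ij}^{0}\vee d_{j}^{0})\wedge w_{ij}^{0}\leqslant y_{j}^{k}$ ``by (\ref{eqd0}), (\ref{eqc0})'' --- an assertion that is valid for patterns $k\in LE_{ij}$ (where $c_{ij}^{0}\vee d_{j}^{0}\leqslant y_{j}^{k}$) but says nothing about $k\notin LE_{ij}$.

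However, the step you postpone is not provable: the inequality, and with it the theorem, is false under exactly the stated hypotheses, so your plan of extracting a contradiction from $(\textbf{W},\textbf{c},\textbf{d})\in M^{wcd}$ cannot succeed. Take $L$ to be the Boolean lattice on atoms $a,b,e$, with $n=2$, $m=1$, $p=2$, and patterns $\textbf{x}^{1}=(0,\,a\vee b)$, $y^{1}=a\vee b$ and $\textbf{x}^{2}=(e,\,a)$, $y^{2}=a$. Then $LE_{11}=\{1\}$ and $LE_{21}=\{1,2\}$, so (\ref{eqd0})--(\ref{eq5}) give $d_{1}^{0}=a$, $c_{11}^{0}=a\vee b$, $c_{21}^{0}=a$, $w_{11}^{0}=(0\Rightarrow a\vee b)\wedge(e\Rightarrow a)=a\vee b$, $w_{21}^{0}=1$. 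The triple $\textbf{W}=(0,1)^{T}$, $\textbf{c}=(0,0)$, $\textbf{d}=(0)$ lies in $M^{wcd}$ (both patterns are reproduced exactly) and satisfies $c_{ij}\leqslant c_{ij}^{0}$, so the hypotheses of Theorem \ref{th2} hold. Yet $c_{11}^{0}\wedge w_{11}^{0}=a\vee b\not\leqslant a=y^{2}$, and on pattern $k=2$ the canonical triple outputs $[(e\vee c_{11}^{0})\wedge w_{11}^{0}]\vee[(a\vee c_{21}^{0})\wedge w_{21}^{0}]\vee d_{1}^{0}=(a\vee b)\vee a\vee a=a\vee b\neq a$, so $(\textbf{W}^{0},\textbf{c}^{0},\textbf{d}^{0})\notin M^{wcd}$. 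The mechanism is that $c_{11}^{0}$, computed only over the patterns in $LE_{11}$, lifts the input $x_{1}^{2}=e$ --- harmless in the assumed solution, where $w_{11}=0$ annihilates it --- above anything that $w_{11}^{0}$ can cut back down below $y^{2}$; since the assumed solution coexists peacefully with this failure, no argument feeding the relations back into (\ref{eq3}) can reach a contradiction. A correct statement would have to strengthen the hypotheses (in the spirit of Theorem \ref{th3}) or weaken the conclusion; note that the same example also refutes the intermediate inequality (\ref{eqy}) on which the paper's own proof rests.
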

\begin{proof}
Let us $\forall i\in N,\;\forall j\in M,\;\forall k\in P,\; c_{ij}\leqslant c_{ij}^{0}$.
Therefore, by (\ref{eq3}), (\ref{eqd0}), (\ref{eqc0}), and Theorem \ref{th1}, we get
\begin{equation}\label{eqy}
(x_{i}^{k}\vee c_{ij}\vee d_{j})\wedge (w_{ij}\vee d_{j})
\leqslant (x_{i}^{k}\vee c_{ij}^{0}\vee d_{j}^{0})\wedge (w_{ij}^{0}\vee d_{j}^{0})\leqslant y_{j}^{k}.
\end{equation}

This is true because $w_{ij}^{0}\geqslant d_{j}^{0}\geqslant d_{j}$ by (\ref{eq5}), (\ref{eqd0}), Theorem \ref{th1} and the implication Definition \ref{imp}, $w_{ij} \leqslant w_{ij}^{0}$ by Theorem \ref{th1} and, hence, $(c_{ij}\vee d_{j})\wedge w_{ij}\leqslant (c_{ij}^{0}\vee d_{j}^{0})\wedge w_{ij}^{0} \leqslant y_{j}^{k}$ by (\ref{eqd0}), (\ref{eqc0}); also, $x_{i}^{k}\wedge w_{ij}^{0}\leqslant y_{j}^{k}$, since $x_{i}^{k}\wedge w_{ij}^{0} = x_{i}^{k}\wedge\bigwedge_{k\in P}(x_{i}^{k}\Rightarrow y_{j}^{k})\leqslant x_{i}^{k}\wedge (x_{i}^{k}\Rightarrow y_{j}^{k})\leqslant y_{j}^{k}$ by (\ref{eq5}) and the implication Definition \ref{imp}.


Thus, by (\ref{eq3}), we get:
\begin{multline*}y_{j}^{k} = \bigvee_{i\in N}\{(x_{i}^{k}\vee c_{ij}\vee d_{j})\wedge (w_{ij}\vee d_{j})\}\leqslant \\
\leqslant\bigvee_{i\in N}\{(x_{i}^{k}\vee c_{ij}^{0}\vee d_{j}^{0})\wedge (w_{ij}^{0}\vee d_{j}^{0})\}\leqslant y_{j}^{k}.
\end{multline*}
Hence, $\bigvee_{i\in N}\{(x_{i}^{k}\vee c_{ij}^{0}\vee d_{j}^{0})\wedge (w_{ij}^{0}\vee d_{j}^{0})\}= y_{j}^{k}$ for $\forall j\in M,\; k\in P$, i.e., $(\textbf{W}^{0},\textbf{c}^{0},\textbf{d}^{0})\in M^{wcd}$.
\end{proof}
The following Theorem provides us with a sufficient condition for the existence of a solution of (\ref{eq3}).
\begin{Th}\label{th3}
The set $M^{wcd} \neq \emptyset$ and $(\textbf{W}^{0},\textbf{c},\textbf{d})\in M^{wcd}$, if
\;$\forall j\in M,\; i\in N,\; c_{j}\leqslant d_{j}^{0} = \bigwedge_{k\in P}y_{j}^{k},\; \;\bigcup_{i\in N}TS_{ij}^{G}((\textbf{W}^{0},\textbf{d});Y) = P\;  and\;\forall k\in P\; \bigvee_{i:\;k\in TS_{ij}^{G}}x_{i}^{k} \geqslant y_{j}^{k}$.
\end{Th}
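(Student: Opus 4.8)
The plan is to fix arbitrary $j\in M$ and $k\in P$ and show that the candidate triple $(\textbf{W}^{0},\textbf{c},\textbf{d})$ reproduces $y_{j}^{k}$ exactly, proving the equality
\[
\bigvee_{i\in N}\{(x_{i}^{k}\vee c_{ij}\vee d_{j})\wedge(w_{ij}^{0}\vee d_{j})\}=y_{j}^{k}
\]
by establishing the inequalities $\leqslant$ and $\geqslant$ separately; since $j,k$ are arbitrary this gives $(\textbf{W}^{0},\textbf{c},\textbf{d})\in M^{wcd}$, and in particular $M^{wcd}\neq\emptyset$. The first move is purely algebraic: by distributivity the generic summand collapses to
\[
(x_{i}^{k}\vee c_{ij}\vee d_{j})\wedge(w_{ij}^{0}\vee d_{j})=\big((x_{i}^{k}\vee c_{ij})\wedge w_{ij}^{0}\big)\vee d_{j},
\]
which separates the signal part from the threshold $d_{j}$ and is the identity I would rely on throughout.

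For the upper bound I would dominate each summand by $y_{j}^{k}$. The threshold contributes $d_{j}\leqslant d_{j}^{0}=\bigwedge_{k}y_{j}^{k}\leqslant y_{j}^{k}$. For the signal part, distributing once more gives $(x_{i}^{k}\wedge w_{ij}^{0})\vee(c_{ij}\wedge w_{ij}^{0})$; here $x_{i}^{k}\wedge w_{ij}^{0}\leqslant y_{j}^{k}$ exactly as in the proof of Theorem \ref{th2} (from $w_{ij}^{0}\leqslant x_{i}^{k}\Rightarrow y_{j}^{k}$ and Definition \ref{imp}), while $c_{ij}\wedge w_{ij}^{0}\leqslant c_{ij}\leqslant d_{j}^{0}\leqslant y_{j}^{k}$ by the hypothesis $c_{ij}\leqslant d_{j}^{0}$. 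Hence every summand is $\leqslant y_{j}^{k}$, so the whole join is as well.

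The lower bound is where the two remaining hypotheses enter and is the crux. Because $d_{j}^{0}\leqslant w_{ij}^{0}$, membership $k\in TS_{ij}^{G}$ means precisely $x_{i}^{k}\wedge y_{j}^{k}\leqslant w_{ij}^{0}$. I would first establish, for every such $i$, the identity $x_{i}^{k}\wedge w_{ij}^{0}=x_{i}^{k}\wedge y_{j}^{k}$: the direction $\leqslant$ holds since $x_{i}^{k}\wedge w_{ij}^{0}\leqslant y_{j}^{k}$ and $x_{i}^{k}\wedge w_{ij}^{0}\leqslant x_{i}^{k}$, and the direction $\geqslant$ holds since $w_{ij}^{0}\geqslant x_{i}^{k}\wedge y_{j}^{k}$. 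Taking the join over the index set $\{i:\,k\in TS_{ij}^{G}\}$, which is nonempty because $\bigcup_{i}TS_{ij}^{G}=P$, and factoring $y_{j}^{k}$ out by distributivity,
\[
\bigvee_{i:\,k\in TS_{ij}^{G}}(x_{i}^{k}\wedge w_{ij}^{0})=\Big(\bigvee_{i:\,k\in TS_{ij}^{G}}x_{i}^{k}\Big)\wedge y_{j}^{k}=y_{j}^{k},
\]
where the last equality uses the third hypothesis $\bigvee_{i:\,k\in TS_{ij}^{G}}x_{i}^{k}\geqslant y_{j}^{k}$. Since each summand of the target join satisfies $\big((x_{i}^{k}\vee c_{ij})\wedge w_{ij}^{0}\big)\vee d_{j}\geqslant x_{i}^{k}\wedge w_{ij}^{0}$ and the target join ranges over all of $N$, it dominates this partial join and is therefore $\geqslant y_{j}^{k}$.

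Combining the two bounds yields the required equality for all $j,k$, which is the assertion of the theorem. The main obstacle is this lower bound, and within it the identification $x_{i}^{k}\wedge w_{ij}^{0}=x_{i}^{k}\wedge y_{j}^{k}$ on $TS_{ij}^{G}$ together with the distributive factoring of $y_{j}^{k}$ out of the join; once these are in hand, the three hypotheses (the threshold bound, the covering $\bigcup_{i}TS_{ij}^{G}=P$, and the join domination) slot in directly. Two bookkeeping points to keep straight are that $TS_{ij}^{G}$ may here be taken in its simplified form $\{k:\,x_{i}^{k}\wedge y_{j}^{k}\leqslant w_{ij}^{0}\}$, and that closing the upper bound requires $d_{j}\leqslant d_{j}^{0}$ in addition to $c_{ij}\leqslant d_{j}^{0}$.
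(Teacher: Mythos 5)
Your proposal is correct and takes essentially the same route as the paper's own proof: the same upper bound (each summand is dominated by $y_j^k$ via $x_i^k\wedge w_{ij}^0\leqslant y_j^k$ from Definition \ref{imp} and $c_{ij}\vee d_j\leqslant d_j^0\leqslant y_j^k$), and the same lower bound (restrict the join to $\{i:\,k\in TS_{ij}^{G}\}$, use $w_{ij}^{0}\geqslant x_i^k\wedge y_j^k$, then factor $y_j^k$ out by distributivity and invoke the covering and join-domination hypotheses). The differences are only cosmetic --- you separate the threshold with the identity $(x_i^k\vee c_{ij}\vee d_j)\wedge(w_{ij}^0\vee d_j)=((x_i^k\vee c_{ij})\wedge w_{ij}^0)\vee d_j$ and state the clean equality $x_i^k\wedge w_{ij}^0=x_i^k\wedge y_j^k$ on $TS_{ij}^{G}$, and you explicitly flag the implicit assumption $d_j\leqslant d_j^0$, which the paper's proof also uses without stating it in the hypotheses.
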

\begin{proof}
Let us $\forall j\in M,\;k\in P,\; \exists n\in N:\; k\in TS_{nj}^{G}((\textbf{W}^{0},\textbf{d});Y)$.
Then,
\begin{equation}
w_{nj}^{0}\geqslant x_{n}^{k}\wedge y_{j}^{k},
\label{eq7n}
\end{equation}
and
\begin{equation}
\bigvee_{n:\;k\in TS_{nj}^{G}}x_{n}^{k}\wedge y_{j}^{k} = y_{j}^{k}.
\label{eq8n}
\end{equation}

Let us consider:
\begin{multline*}
\bigvee_{i \in N}((x_{i}^{k}\vee c_{j}\vee d_{j})\wedge (w_{ij}^{0}\vee d_{j})) \geqslant \\ \geqslant \bigvee_{i \in N}(x_{i}^{k}\wedge w_{ij}^{0})\wedge y_{j}^{k} \geqslant \\ \geqslant
\bigvee_{n:\;k\in TS_{nj}^{G}}(x_{n}^{k}\wedge w_{nj}^{0}\wedge y_{j}^{k}) = y_{j}^{k},\label{eq10n}
\end{multline*}
since \ref{eq7n} and \ref{eq8n}. 

Hence,
\begin{equation}
\bigvee_{i\in N}\{(x_{i}^{k}\vee c_{j}\vee d_{j})\wedge (w_{ij}^{0}\vee d_{j})\}\in \{x_{i}^{l}\in L\; \mid\; l\in GE_{ij}(X,Y)\}.\label{eq11n}
\end{equation}
On the other hand, we get by the definitions of implication, $\textbf{d}^{0}$, and $\textbf{W}^{0}$ and Theorem 1 that $\forall i\in N,\; j\in M,\; k\in P,\,(x_{i}^{k}\vee c_{j}\vee d_{j})\wedge (w_{ij}^{0}\vee d_{j})\in \{x_{i}^{l}\in L\; \mid\; l\in LE_{ij}(X,Y)\}$, since $w_{ij}^{0}\geqslant d_{j}^{0}\geqslant d_{j}$, $x_{i}^{k}\wedge w_{ij}^{0}\leqslant y_{j}^{k}$ (since $x_{i}^{k}\wedge \bigwedge_{l}[x_{i}^{l}\Rightarrow y_{j}^{l}]\leqslant x_{i}^{k}\wedge [x_{i}^{k}\Rightarrow y_{j}^{k}]\leqslant y_{j}^{k}$), and $(c_{j}\vee d_{j}) \leqslant y_{j}^{k}$. Hence,
\begin{equation}
\bigvee_{i\in N}\{(x_{i}^{k}\vee c_{j}\vee d_{j})\wedge (w_{ij}^{0}\vee d_{j})\}\in \{x_{i}^{l}\in L\; \mid\; l\in LE_{ij}(X,Y)\}.\label{eq12n}
\end{equation}
Combining (\ref{eq11n}) and (\ref{eq12n}), we get $(\textbf{W}^{0},\textbf{c},\textbf{d})\in M^{wcd}$, i.e., $M^{wcd} \neq \emptyset$. However, if there is a set of lattice elements that satisfy the condition of the theorem: $\{(y')_{j}^{k} \mid y_{j}^{k} < (y')_{j}^{k} \leqslant w_{i_{0}j}^{0}\}$ with the same matrix $w_{ij}^{0}$, the matrix is the solution only for $y$, since $(x_{i}^{k}\vee c_{j}\vee d_{j})\wedge (w_{ij}^{0}\vee d_{j}) \leqslant y_{j}^{k}$ and this expression does not change for such $y'$\footnote{$x_{i}^{k}\wedge \bigwedge_{l}[x_{i}^{l}\Rightarrow y_{j}^{l}]\leqslant x_{i}^{k}\wedge [x_{i}^{k}\Rightarrow y_{j}^{k}] = x_{i}^{k}\wedge y_{j}^{k}$}. Hence, the proof is valid only for $y_{j}^{k}$.
\end{proof}

\section{Learning Algorithm}
\label{sec3}
In general, all $(x_{1},y_{1}), ... (x_{p}, y_{p})$ can be stored in such associative memory if  there exists a matrix--vector set $(\textbf{W}, \textbf{c}, \textbf{d})$ such as $\textbf{y} = ((\textbf{x}\vee \textbf{c})\circ \textbf{W})\vee \textbf{d}$. According
to Theorem \ref{th3}, such sets exist if the condition of
the theorem is satisfied. Therefore, to obtain solutions, it
suffices to calculate expression (\ref{eq5}) and check the
corresponding condition of Theorem \ref{th3}. However, such an algorithm does
not demonstrate the adaptability and self-regulation of multi-valued
associative memory. Therefore, we generalize the
dynamic $\delta$-learning algorithm of fuzzy associative memory,
introduced in \cite{li1}, \cite{li2}, \cite{li3} for different fuzzy cases, to the multi-valued
case as in \cite{maximov20e}. We give this algorithm here only as illustration of adaptability capabilities of such a network since it all comes down to calculating of (\ref{eq5}).

Let us note, that this algorithm fits only memories with the \emph{atomic} lattice of weights and data, since one-to-one correspondence between lattice elements and the sets of the element generator unions exist only in this case. Also, since we use linguistic variables, not numbers, and, hence, do not use activation or membership functions, we cannot use backpropagation learning methods in our learning algorithm, unlike the advanced fuzzy case (\cite{epdf}). Hence, we do not use such new effective methods for parameters updating as, e.g., in \cite{dr}. Also, let us note that we do not have such parameters here at all.

Thus, we get an \textbf{Algorithm} of the memory weights $w_{ij}$ iteration for $i\in N$ and $j\in M$. We begin from the highest lattice element $w_{ij}(0) = 1$ and go down until the next two iterations become equal.
\begin{description}
\item[Step 1.]
Initialization: for $i\in N$, $j\in M$ let us put $w_{ij}(0) = 1,\;t=0$;
\item[Step 2.]
Let $\textbf{W}(t) = (w_{ij}(t))$;
\item[Step 3.]
Let us calculate the resulting output of the memory (\ref{eq4}) $\textbf{y} = ((\textbf{x}\vee \textbf{c})\circ \textbf{W})\vee \textbf{d}$, i.e., $\forall k\in P,\;j\in M,\; y_{j}^{k}(t) = (\bigvee_{i}[(x_{i}^{k}\vee c_{ij})\wedge w_{ij}(t)])\vee d_{j}$.
Here $y_{j}^{k}$ is the stored memory output, and $y_{j}^{k}(t)$ is its iterated output which converges to  $y_{j}^{k}$ at the end of the iteration process, by (\ref{eqy}) and Theorem \ref{th5}, if a solution of (\ref{eq4}) exists (Theorem \ref{th3}).
\item[Step 4.]
Weights selection.
\newtheorem{denote}{Denotation}
\begin{denote}
From this place, we denote the union of generators of the lattice element $y_{j}^{k}$ by $\{y_{j}^{k}\}$ (this is a set) and  the matrix of such sets $\{w\}$ corresponding to the weight matrix elements by $\{w\}_{ij}$. The matrix elements are the sets of generators of the weight matrix elements.
\end{denote}
Matrices $w_{ij}$ of weights and $\{w\}_{ij}$ are one-to-one correspondent to each other in atomic lattices when every lattice element is represented by the join of its generators.
A \textbf{minus} sign will denote the \textbf{set difference}\footnote{Given set $A$ and set $B$ the set difference of set $B$ from set $A$ is the set of all element in $A$, but not in $B$.}.

Then, we choose the iterated weight with the following formulas:
\begin{equation}\label{eqw}
\{w\}_{ij}^{k}(t+1) = \left\{\begin{aligned}\{w\}_{ij}^{k}(t);\; if\; k:\;[(x_{i}^{k}\vee c_{ij})\wedge w_{ij}(t)]\vee d_{j}\leqslant y_{j}^{k};\\
\{w\}_{ij}^{k}(t) - (\{x_{i}^{k}\Rightarrow y_{j}^{k}(t)\} - \{x_{i}^{k}\Rightarrow y_{j}^{k}\}),\; otherwise.
                        \end{aligned}
                              \right.
\end{equation}
\begin{equation}\label{eqw1}
\{w\}_{ij}(t+1) = \bigcap_{k}\{w\}_{ij}^{k}(t+1).
\end{equation}
\item[Step 5.]
For $i\in N$, $j\in M$, let us check $\{w\}_{ij}(t+1) = \{w\}_{ij}(t)$? If this is
true, then the Algorithm stops, otherwise $t = t + 1$ and goes to Step 2.
\end{description}

\begin{Th}\label{th5}
Let the matrix sequence $\{\textbf{W}(t)\;|\;t = 1, 2 ...\}$ is obtained by the learning Algorithm. Then,
\begin{itemize}
\item[(a)]
$\{\textbf{W}(t)\;|\;t = 1, 2 ...\}$ is a non-increasing sequence;
\item[(b)]
$\{\textbf{W}(t)\;|\;t = 1, 2 ...\}$ converges;
\item[(c)]
$\{\textbf{W}(t)\;|\;t = 1, 2 ...\}$ converges to $\textbf{W}^{0}$, where $w_{ij}^{0}$ is defined in (\ref{eq5}).
\end{itemize}
\end{Th}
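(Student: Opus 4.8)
The plan is to work entirely through the generator-set representation $\{\cdot\}$ of Step 4, which on an atomic lattice is an order-isomorphism onto subsets of the generators, carrying $\vee$ to union, $\wedge$ to intersection, and $\leqslant$ to inclusion. First I would record the generator form of the implication: writing $g$ for a generator, one checks directly from Definition \ref{imp} that $g\notin\{a\Rightarrow b\}$ precisely when $g\leqslant a$ and $g\not\leqslant b$, i.e. $\{a\Rightarrow b\}$ contains every generator except those below $a$ but not below $b$. With this dictionary the three claims become statements about how the finite sets $\{w\}_{ij}(t)$ evolve, and I will read each iteration as starting from the common value $\{w\}_{ij}^{k}(t)=\{w\}_{ij}(t)$ before splitting over $k$.

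For (a), the update (\ref{eqw}) either leaves $\{w\}_{ij}^{k}(t)$ unchanged or deletes generators from it, so $\{w\}_{ij}^{k}(t+1)\subseteq\{w\}_{ij}(t)$ for every $k$; intersecting over $k$ in (\ref{eqw1}) preserves inclusion, whence $\{w\}_{ij}(t+1)\subseteq\{w\}_{ij}(t)$, that is $w_{ij}(t+1)\leqslant w_{ij}(t)$. Part (b) is then immediate: $L$ is finite, so a non-increasing chain of its elements (equivalently a shrinking chain of finite generator sets) stabilizes after finitely many steps; call the limit $\textbf{W}^{*}$.

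For (c) I would prove two inclusions. The lower bound $w_{ij}^{*}\geqslant w_{ij}^{0}$ follows by induction on $t$ from $\{w_{ij}^{0}\}\subseteq\{w_{ij}(t)\}$: it holds at $t=0$ since $w_{ij}(0)=1$, and the set removed in the otherwise-branch, $\{x_{i}^{k}\Rightarrow y_{j}^{k}(t)\}-\{x_{i}^{k}\Rightarrow y_{j}^{k}\}$, is by construction disjoint from $\{x_{i}^{k}\Rightarrow y_{j}^{k}\}$, hence from $\{w_{ij}^{0}\}=\bigcap_{l}\{x_{i}^{l}\Rightarrow y_{j}^{l}\}\subseteq\{x_{i}^{k}\Rightarrow y_{j}^{k}\}$ by (\ref{eq5}); so no generator of $w_{ij}^{0}$ is ever deleted. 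For the upper bound $w_{ij}^{*}\leqslant w_{ij}^{0}$, suppose some generator $g\in\{w_{ij}^{*}\}\setminus\{w_{ij}^{0}\}$. Then there is $k_{0}$ with $g\notin\{x_{i}^{k_{0}}\Rightarrow y_{j}^{k_{0}}\}$, i.e. $g\leqslant x_{i}^{k_{0}}$ and $g\not\leqslant y_{j}^{k_{0}}$. Since $g\leqslant x_{i}^{k_{0}}\vee c_{ij}$ and $g\in\{w_{ij}^{*}\}$, we get $[(x_{i}^{k_{0}}\vee c_{ij})\wedge w_{ij}^{*}]\vee d_{j}\geqslant g$ while $y_{j}^{k_{0}}\not\geqslant g$, so the if-clause of (\ref{eqw}) fails and the otherwise-branch fires for $k_{0}$. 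As the full output dominates that contribution we also have $g\leqslant y_{j}^{k_{0}}(t)$, so the implication dictionary gives $g\in\{x_{i}^{k_{0}}\Rightarrow y_{j}^{k_{0}}(t)\}$ yet $g\notin\{x_{i}^{k_{0}}\Rightarrow y_{j}^{k_{0}}\}$; hence $g$ lies in the removed set and is deleted, contradicting $\textbf{W}^{*}$ being a fixed point of the iteration (Step 5). Thus $w_{ij}^{*}\leqslant w_{ij}^{0}$, and with the lower bound $\textbf{W}^{*}=\textbf{W}^{0}$.

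The main obstacle is this last step: one must confirm that at the fixed point every surplus generator genuinely triggers the otherwise-branch \emph{and} lands in the set difference removed by (\ref{eqw}). This is exactly where atomicity is indispensable, through the clean generator descriptions of $\Rightarrow$, $\wedge$ and $\vee$, and where the per-connection form of the overshoot test in (\ref{eqw}) has to be reconciled with the global output $y_{j}^{k}(t)$ appearing inside the implication.
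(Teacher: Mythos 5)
Your proof is correct, but part (c) takes a genuinely different route from the paper's. Parts (a) and (b) coincide with the paper's argument (monotone deletion of generators, plus boundedness/finiteness). For (c), the paper computes the iterates explicitly: since $w_{ij}(0)=1$, the initial output $y_{j}^{k}(0)$ dominates $x_{i}^{k}$, so $\{x_{i}^{k}\Rightarrow y_{j}^{k}(0)\}$ is the full generator set and the first update already yields $\{w\}_{ij}^{k}(1)=\{x_{i}^{k}\Rightarrow y_{j}^{k}\}$ (also for the if-branch indices, where that implication equals $1$); hence $w_{ij}(1)=\bigwedge_{k\in P}(x_{i}^{k}\Rightarrow y_{j}^{k})=w_{ij}^{0}$, and the second update is a no-op because the removed set $\{x_{i}^{k}\Rightarrow y_{j}^{k}(1)\}-\{x_{i}^{k}\Rightarrow y_{j}^{k}\}$ is, by definition of set difference, disjoint from $\{x_{i}^{k}\Rightarrow y_{j}^{k}\}$ --- so the algorithm stops at the second step exactly at $\textbf{W}^{0}$. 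You instead characterize the limit abstractly by two inclusions: the invariant $\{w_{ij}^{0}\}\subseteq\{w\}_{ij}(t)$ (your disjointness observation is the same one the paper uses for its step-two no-op), and a fixed-point argument showing that any surplus generator $g\leqslant x_{i}^{k_{0}}$, $g\not\leqslant y_{j}^{k_{0}}$ forces the otherwise-branch to fire and lands in the removed set, contradicting stabilization. The paper's computation buys the stronger quantitative fact that convergence occurs in at most two iterations; your argument buys robustness --- it identifies the limit without tracking which pattern indices fall into which branch at which time, and your explicit generator dictionary for the implication ($g\notin\{a\Rightarrow b\}$ precisely when $g\leqslant a$ and $g\not\leqslant b$) makes precise the use of atomicity that the paper leaves implicit. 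Both arguments, of course, rest on the same order-isomorphism between the atomic lattice and sets of generators.
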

\begin{proof}
(a) For $i\in N$, $j\in M$, $k\in P$ we get from (\ref{eqw}) that $\{w\}_{ij}^{k}(t+1) \leqslant \{w\}_{ij}^{k}(t)$. Therefore, for $i\in N$, $j\in M$: $\textbf{W}(t+1)\subseteq \textbf{W}(t)$. Thus, $\{\textbf{W}(t)\;|\;t = 1, 2 ...\}$ is a non-increasing sequence.

(b) $\{\textbf{W}(t)\;|\;t = 1, 2 ...\}$ converges, since the sequence $w_{ij}(t)$ is bounded below by the smallest lattice element 0 $\forall t = 1, 2 ...$.

(c) If, all $k$'s are such that $\forall t:\; [(x_{i}^{k}\vee c_{ij})\wedge w_{ij}(t)]\vee d_{j}\leqslant y_{j}^{k}$, then $w_{ij}(t) = w_{ij}(0) = w_{ij}^{0} = 1$ by (\ref{eqw}), (\ref{eqw1}), and Theorem \ref{th1}.

In particular, if there are not (with some denotation simplification) $k\in TG_{ij}\cup NC_{ij}$, i.e., all $x_{i}^{k} \leqslant y_{j}^{k}$, then $\forall t:\; [(x_{i}^{k}\vee c_{ij}^{0})\wedge w_{ij}(t)]\vee d_{j}^{0}\leqslant y_{j}^{k}$ by (\ref{eqc0}), (\ref{eqd0}). Therefore, again, by (\ref{eqw}), (\ref{eqw1}), and (\ref{eq5}), $w_{ij}(t) = w_{ij}(0) = w_{ij}^{0} = 1$.

If such $k$'s exist that $[(x_{i}^{k}\vee c_{ij})\wedge w_{ij}(t)]\vee d_{j}\nleqslant y_{j}^{k}$, then, we get for these $k$ on the basis of the definition of the sets' difference operation:

$\{w\}_{ij}^{k}(1) = 1 - (\{x_{i}^{k}\Rightarrow \bigvee_{\substack{i\in N}}([(x_{i}^{k}\vee c_{ij})\wedge w_{ij}(0)]\vee d_{j})\} - \{x_{i}^{k}\Rightarrow y_{j}^{k}\}) = \{x_{i}^{k}\Rightarrow y_{j}^{k}\} \geqslant \{w\}_{ij}^{0}$, since the first term in parentheses is equal to 1: $\{x_{i}^{k}\Rightarrow \bigvee_{\substack{i\in N}}([(x_{i}^{k}\vee c_{ij})\wedge w_{ij}(0)]\vee d_{j})\} = 1$.

$\{w\}_{ij}^{k}(2) = \{x_{i}^{k}\Rightarrow y_{j}^{k}\} - (\{x_{i}^{k}\Rightarrow \bigvee_{\substack{i\in N}}([(x_{i}^{k}\vee c_{ij})\wedge \{w\}_{ij}(1)]\vee d_{j})\} - \{x_{i}^{k}\Rightarrow y_{j}^{k}\})$. From here we obtain again, on the basis of the definition of the sets' difference operation (the term in parentheses does not contain elements from all $\{x_{i}^{k}\Rightarrow y_{j}^{k}\}$):

$\{w\}_{ij}^{k}(2) = \{x_{i}^{k}\Rightarrow y_{j}^{k}\} = \{w\}_{ij}^{k}(1)$.
Thus, the Algorithm converges in the second step.

It converges to $\textbf{W}^{0}$, since $w_{ij}(2) = w_{ij}(1) = \bigwedge_{k\in P}(x_{i}^{k}\Rightarrow y_{j}^{k}) = w_{ij}^{0}$.

\end{proof}

Thus, the Algorithm converges to $\textbf{W}^{0}$ and gives the solution $(\textbf{W}^{0}, \textbf{c}^{0}, \textbf{d}^{0})$ of (\ref{eq4}) if such a solution exists under the conditions of Theorem \ref{th3}. However, we can say nothing about existence of the solution $(\textbf{W}^{0}, \textbf{c}, \textbf{d})$ in general.

\section{Discussion}
\label{sec4}
Theorem \ref{th1} tells us that if $\textbf{W}^{0}$ is a solution, then it provides the largest solution of the equation (\ref{eq4}), and we have used this fact in Sec. \ref{sec3} when we went down from the largest lattice element to $\textbf{W}^{0}$ in a learning algorithm. The same is true for thresholds $\textbf{d}^{0}$.

However, we can say much less about thresholds $\textbf{c}$ unlike the fuzzy case where similar $c^{0}$'s are the maximal possible ones, so $(\textbf{W}^{0}, \textbf{c}^{0}, \textbf{d}^{0})$ is the largest solution of (\ref{eq4}) \cite{epdf}. Indeed, we can rewrite (\ref{eq3}) as
\begin{equation}\label{eq3n}
y_{j}^{k} = (\bigvee_{i}\{(x_{i}^{k}\vee c_{ij})\wedge w_{ij}\})\vee d_{j} = \bigvee_{i}[(x_{i}^{k}\wedge w_{ij})\vee (c_{ij}\wedge w_{ij})]\vee d_{j}.
\end{equation}
All solutions $w_{ij}\leqslant w_{ij}^{0} = \bigwedge_{k\in P}(x_{i}^{k}\Rightarrow y_{j}^{k})\geqslant \bigwedge_{k\in P}y_{j}^{k}\leqslant c_{ij}^{0}$. Also, all $(x_{i}^{k}\wedge w_{ij}\})\leqslant y_{j}^{k}$ and $(c_{ij}\wedge w_{ij})\leqslant y_{j}^{k}$. Therefore, $\textbf{c}$ may be almost arbitrary towards $\textbf{c}^{0}$.


The meaning of  Theorem \ref{th3} is that not every
pair of $x - y$ patterns can be stored in such associative memory with confidence (as
well as in \cite{maximov20} and in a fuzzy case) -- sufficiently large meets of
implications in $x - y$ patterns in (\ref{eq5}) should be enough for $\bigcup_{i\in N}TS_{ij}^{G}((\textbf{W}^{0},\textbf{d}^{0});Y) = P$. Besides, all $x$'s in every pattern cannot concentrate in the lower zone (Fig. \ref{fig2}), otherwise combining them may not give $y$. The storing capacity of the variant with thresholds is higher than without them, since combining $\textbf{c}$ with the $x$'s in the lower zone increases them. Also, the same is true for $d$.
However, we cannot assert that $c_{ij}^{0}$'s provide the largest capacity: $c_{ij}\wedge w_{ij}^{0}$ may give additional capabilities in the case of $c_{ij}$ is incomparable with $\bigwedge_{k\in P}y_{j}^{k}$ and Theorem \ref{th3} does not hold.

Finally, we will indicate without proof (the proof in\cite{epdf} does not change in the multi-valued case) that increasing number of neuron layers does not entail an increase of capacity.

The next point concerns the computational complexity.
We have seen that implications $x_{i}^{k}\Rightarrow y_{j}^{k}$'s calculating is necessary to obtain the weight matrix in (\ref{eq5}), (\ref{eqw}). In the case of a complicated lattice, such a calculation may be an elaborative task: an atomic lattice with $N$ generators has $2^{N}$ elements. However, this problem is resolved in \cite{maximov20e}, \cite{maximov20} using a multi-valued neural network\footnote{Also, the use of a multi-valued associative memory similar to this article is proposed to obtain a control solution  quickly.}. Such a network has the lattice $L$ as a unique layer and a $\delta$-like activation function for a neuron in each lattice element. Such neurons skip exiting only those inputs, which may be the implication value. Then, their maximum gives the real implication. Hence, we have to obtain the maximum of at most $N-1$ nested sets of generators in order to calculate one implication. Thus, for $M$ outputs and $P$ patterns, we get at most $(N-1)^{2}NMP$ complexity. We have to add $O(P)$ to obtain meets of $k$ patterns in $w_{ij_{0}} = \bigwedge_{k\in P}(x_{i}^{k}\Rightarrow y_{j}^{k})$. Hence, computational complexity is linear by $M$  and cubic by $N$ and $P$.

There is no need to talk about cost effectiveness in our task, since we obtain an exact solution of equation (\ref{eq4}) by the Algorithm and we do not select network parameters (we do not have them at all in the associative memory) as in, e.g., \cite{dr}. Hence, we can store in the memory all training patterns with confidence, if Theorem \ref{th3} holds.

\section{Simulation Example}
\label{sec5}
We will use a complete Brouwerian lattice $L$ as a set of values of neural network variables in the processing of control and diagnostic information of complex dynamic objects, e.g., aircraft and robotic complexes, as expanding the regular neural networks approach of \\ \cite{gonch},\\ \cite{gonch1}.

These authors have suggested an algorithm for functional diagnostics of such complexes, based on neural network technologies. We expand this approach to classifying spacecraft trajectories with linguistic lattice-valued estimations of trajectory parameters. Our classifying problem differs from, e.g., system state estimating in \cite{er} or big data classification in \cite{fr} since we consider rather limited sets of linguistic variables which can be observed by humans (experts). Also again, we have no deal with numbers, thus our process representation is quite different. However, it seems that the equations of different estimators in \cite{er} may be considered with multi-valued values similarly to Maximov D. ``Multi-valued Cognitive Maps''.

Thereby, let us consider a system where the state is characterized by four linguistic variables --- $T_{1}...T_{4}$ --- which take values in correspondent subsets of the lattice $L$ Fig. \ref{fig3}.
\begin{figure}\begin{center}
  \includegraphics[scale=1.0]{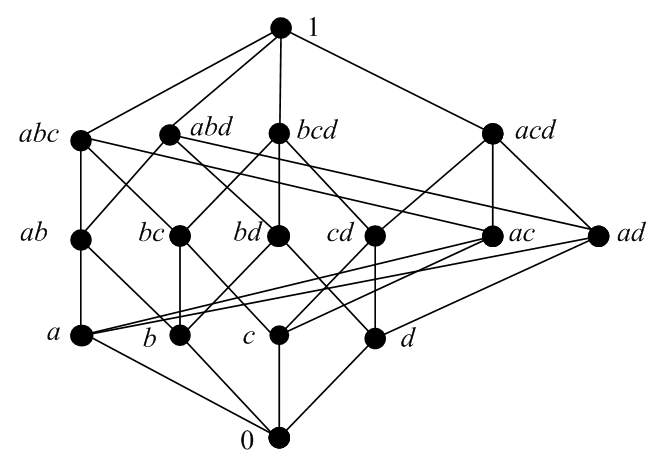}
\end{center}
\caption{A lattice for control and diagnostic information estimating}
\label{fig3}       
\end{figure}
Our task is to decide which of the two classes --- $C_{1}$ or $C_{2}$ --- relates to the current distribution of estimates of these variables. We can imagine these variables as an alphabet of the simplest aircraft trajectories, e.g., a one-sided manoeuvre, snake manoeuvre, spiral manoeuvre, etc., and classes $C_{1}$ and $C_{2}$ as a directional movement type or a chaotic one.

Then, the lattice estimations of $T_{i},\;C_{j}$ denote degrees of confidence/truth of the correspondent valuation. However, these degrees of confidence are also, from another side, some terms in correspondent linguistic variables, e.g., the set $\{a, b, c, d\}$ may be considered as the set of directions $\{up, down, right, left\}$ diagnosed by linguistic sensors, i.e., sensors providing linguistic assessments of directions. Then, joins $ab, ...$ are the moves with such possible directions' uniting or alternating compositions. Hence, we evaluate the linguistic variables also through linguistic terms.

In the approach, the variable $T_{1}$ --- e.g., horizontal snake manoeuvre --- may be estimated by terms $\{ab, abc, abd\}$, the variable $T_{4}$ --- e.g., vertical snake manoeuvre --- may be estimated by terms $\{cd, acd, bcd\}$, the variable $T_{2}$ --- spiral manoeuvre --- by terms $\{ac, ad, bc, bd\}$, the variable $T_{3}$ --- one-sided manoeuvre --- by $\{a, b, c, d\}$, etc. The terms are really the lattice $L$ correspondent sublattices, and the sublattices' elements evaluate them. Elements of the whole lattice $L$ estimate the classes $C_{1}$ of directional moves, and $C_{2}$ of chaotic moves.

Thus, we consider the neural network depicted in Fig. \ref{fig4}, in which the input variables $x_{i}$ are some valuations of the terms estimating variables $T_{i}$.
\begin{figure}\begin{center}
  \includegraphics[scale=0.7]{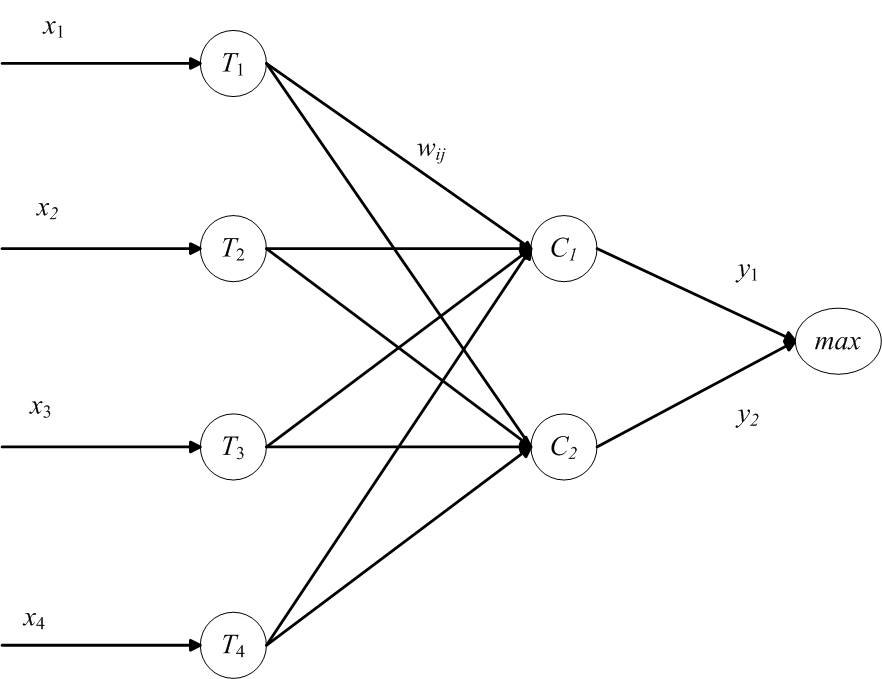}
\end{center}
\caption{The neural network topology for the system moves estimations}
\label{fig4}       
\end{figure}
We must be able to decide which class $C_{i}$ the given object (e.g., a trajectory) belongs to if the input variables $x_{k}$ are estimated by the quantities from $T_{j}$ terms.

Let us consider the next learning pattern pair family $\{(\textbf{x}^{k}, \textbf{y}^{k})\;|\;k\in P\}$ (Table \ref{tb}).
\begin{table}
\begin{center}
$\begin{array}{c|c|c}
k & x & y \\ \hline
1 & (ab, a, a, cd) & (ac, 1) \\ \hline
2 & (bc, bc, c, bcd) & (bcd, bc) \\ \hline
3 & (abc, c, c, c) & (abc, c) \\ \hline
4 & (b, bd, 0, bc) & (1, bc) \\ \hline
5 & (a, ac, a, acd) & (acd, cd) \\ \hline
6 & (ac, ac, b, bc) & (c, abc) \\ \hline
7 & (c, bd, d, d) & (bc, cd) \\ \hline
8 & (d, ad, d, ac) & (cd, acd)
\end{array}$
\end{center}
\caption{Learning pattern pair family}
\label{tb}
\end{table}
Though, our lattice interpretation assumes symmetry by an $a, b, c, d$ situation, we picked up our example to demonstrate threshold using. Thus, we obtain $\textbf{c}^{0} = ((c,c); (c,c); (c,c); (c,c)),\;\textbf{d}^{0} = (c,c)$ and the connection weights matrix:
 $${\textbf{W}^{0}}^{T} = \begin{pmatrix}
 cd,& cb,& ac,& c \\
 cd,& c,& bcd,& bc
 \end{pmatrix}.$$
Such obtained thresholds mean that our training family assumes the direction to the right as a priori priority: all inputs rise up to include it.

We may easily show that the given pattern pair family $\{(\textbf{x}^{k}, \textbf{y}^{k}) | k\in P\}$ satisfies the conditions that $\forall j \in M,\; \bigcup_{i\in N}TS_{ij}^{G}((\textbf{W}^{0}, \textbf{d}^{0});Y) = P$. Hence, by Theorem \ref{th3}, all pattern pairs in Table \ref{tb} can be stored in such a multi-valued associative memory.

We may also investigate classifying properties of this neural network. Let us consider next the possible input patterns with the correspondent classification obtained (greatest output truth values determine the classification):
\begin{center}
$\begin{array}{c|c|c|c}
k & x  & y & C_{i}\\ \hline
9 & (abd, bd, d, d)  & (bcd, cd) & C_{1} \\ \hline
10 & (ad, 0, b, ad)  & (cd, bcd) & C_{2} \\ \hline
11 & (bd, bd, b, a)  & (bcd, bcd) & C_{1} \;or\; C_{2}
\end{array}$
\end{center}
These patterns were selected, as well as learning ones, based on our interpretation of lattice elements as aircraft manoeuvres. Thus, we see that the neural network classifies patterns quite reasonably. However, it should be noted that the system is susceptible to variations in input patterns, and it is not always possible to get such a meaningful interpretation.

In \cite{gonch1}, the average probability of recognizing the type of maneuver being performed based on noisy measuring information is 0.78. In our case, noise is not essential due to roughness of linguistic assessments. However, it is difficult to estimate the recognizing possibilities of multi-valued classifier since there are currently no real data associated with the multi-value patterns. We can only establish that al training patterns are recognized and that other patterns may take incomparable estimations. Hence, we may make a definite decision not always.

\section{Conclusion}
\label{sec6}
This study introduces a new concept of multi-valued neural networks with thresholds, in which weights and data represent not by numbers. The idea continues our studies in which the state of a system is estimated not by numbers, but by elements of a partially ordered set, namely, the lattice. In our case, the lattice is finite and distributive, and it consists of some linguistic variables. Such an approach can facilitate a situation which requires the participation of expert evaluation: in this case --- assessments are the linguistic variables, not numbers. Thus, they do not require vague concepts of defuzzification.

We have expanded the results on fuzzy neural networks to such a multi-valued case in which variables take value in a Brouwer lattice. We discovered conditions under which it is possible to store given pairs of linguistic patterns in such a multi-valued associative memory. Here, thresholds increase the network storing capacity as in the fuzzy case. We also gave the learning
algorithm generalizing the fuzzy one without thresholds. Finally, we gave the example of the network use to classifying aircraft/spacecraft trajectories'.

However, the classification method has an obvious limitation arising from the nature of the representation of variable: our classifier does not always distinguish the system states since they may be incomparable (as the lattice elements). Also, we were not able to get the solutions with the highest capacity. We have obtained only the condition of existence of the solution with largest weights and certain thresholds.

Future researches are going in two directions. First, we develop multi-valued cognitive maps which have the neural network nature. Second, we suppose to expand use of a lattice elements as weights and data values to the lattice subsets, as analogue of fuzzy numbers. We assume such an expansion will give an opportunity for the networks use in more complicated cases, e.g., in image restoration.

\section*{Conflicts of interest}
%

The authors declare that they have no conflict of interest.



\bibliographystyle{spbasic}      

  \bibliography{Maximov_Multi_Valued_r}

\begin{thebibliography}{31}
\providecommand{\natexlab}[1]{#1}
\providecommand{\url}[1]{{#1}}
\providecommand{\urlprefix}{URL }
\expandafter\ifx\csname urlstyle\endcsname\relax
  \providecommand{\doi}[1]{DOI~\discretionary{}{}{}#1}\else
  \providecommand{\doi}{DOI~\discretionary{}{}{}\begingroup
  \urlstyle{rm}\Url}\fi
\providecommand{\eprint}[2][]{\url{#2}}

\bibitem[{Aquino and et~al(2020)}]{ar}
Aquino G, et~al (2020) Novel nonlinear hypothesis for the delta parallel robot
  modeling. IEEE Access 8:46324--46334, \doi{10.1109/ACCESS.2020.2979141}

\bibitem[{Blanco et~al.(1995)Blanco, Delgado, and Requena}]{blanco}
Blanco A, Delgado M, Requena I (1995) Improved fuzzy neural networks for
  solving relational equations. Fuzzy Sets and Systems 72:311--322

\bibitem[{Chiang et~al.(2019)Chiang, Chen, and Huang}]{cr}
Chiang H, Chen M, Huang Y (2019) Wavelet-based eeg processing for epilepsy
  detection using fuzzy entropy and associative petri net. IEEE Access
  7:103255--103262, \doi{10.1109/ACCESS.2019.2929266}

\bibitem[{Elias et~al.(2020)Elias, Rubio, Martinez, and et~al.}]{dr}
Elias I, Rubio JJ, Martinez DI, et~al (2020) Genetic algorithm with radial
  basis mapping network for the electricity consumption modeling. Appl Sci
  (10):4239

\bibitem[{Goncharenko et~al.(2016)Goncharenko, Kucheryavenko, and
  Gal'yamov}]{gonch1}
Goncharenko VI, Kucheryavenko DS, Gal'yamov AM (2016) Recognizing types of
  maneuvers in testing aeroballistic aircraft based on the neural network
  approach and wavelet technology \emph{in Russian}. Neurocomputers:
  development, application (3):12--20

\bibitem[{Hernandez et~al.(2020)Hernandez, Zamora, Sossa, and et~al}]{fr}
Hernandez G, Zamora E, Sossa H, et~al (2020) Hybrid neural networks for big
  data classification. Neurocomputing 390:327--340,
  \doi{10.1016/j.neucom.2019.08.095},
  \urlprefix\url{http://www.sciencedirect.com/science/article/pii/S0925231219314560}

\bibitem[{de~Jesus~Rubio(2009)}]{br}
de~Jesus~Rubio J (2009) Sofmls: Online self-organizing fuzzy modified
  least-squares network. IEEE Transactions on Fuzzy Systems 17(6):1296--1309,
  \doi{10.1109/TFUZZ.2009.2029569}

\bibitem[{Kiselak et~al.(2020)Kiselak, Lu, Svihra, and et~al.}]{spocu}
Kiselak J, Lu Y, Svihra J, et~al (2020) ``spocu'': scaled polynomial constant
  unit activation function. Neural Comput and Applic
  \doi{10.1007/s00521-020-05182-1}

\bibitem[{Kosko(1987)}]{kosko}
Kosko B (1987) Fuzzy associative memories. In: Kandel A (ed) Fuzzy Expert
  Systems Reading, Addison-Weley, MA

\bibitem[{Li and Ruan(1997)}]{li1}
Li XZ, Ruan D (1997) Novel neural algorithms based on fuzzy $\delta$ rules for
  solving fuzzy relation equations: Part i. Fuzzy Sets and Systems 90:11--23

\bibitem[{Li and Ruan(1999)}]{li2}
Li XZ, Ruan D (1999) Novel neural algorithms based on fuzzy $\delta$ rules for
  solving fuzzy relation equations: Part ii. Fuzzy Sets and Systems
  103:473--486

\bibitem[{Li and Ruan(2000)}]{li3}
Li XZ, Ruan D (2000) Novel neural algorithms based on fuzzy $\delta$ rules for
  solving fuzzy relation equations: Part iii. Fuzzy Sets and Systems
  109:355--362

\bibitem[{Liu and Li(2004)}]{epdf}
Liu P, Li H (2004) Fuzzy neural network theory and application. In: Series in
  Machine Perception and Artificial Intelligence, World Scientific Publishing
  Co. Pte. Ltd., London, vol~59, p 395

\bibitem[{Liu and Wang(1997)}]{latval}
Liu Y, Wang PKC (1997) Lattice-valued logic and neural networks. In: Annual
  Meeting of the North American Fuzzy Information Processing Society - NAFIPS
  (Cat. No.97TH8297), IEEE, Syracuse, NY, USA, pp 350--355

\bibitem[{Maximov(2016{\natexlab{a}})}]{maximov16}
Maximov D (2016{\natexlab{a}}) N. {V}asil'ev's logic and many-valued logics
  \emph{in Russian}. Logical Investigations 22(1):82--107

\bibitem[{Maximov(2016{\natexlab{b}})}]{manuscript}
Maximov D (2016{\natexlab{b}}) N.{V}asiliev's logic ideas and the categorical
  semantics of manyvalued logic. Logica Universalis (1):21--43

\bibitem[{Maximov(2018)}]{Maximov_Ax}
Maximov D (2018) N.{A}.\;{V}asil'ev's logic and the problem of future random
  events. Axiomathes 28:201--217, \doi{10.1007/s10516-017-9355-1}

\bibitem[{Maximov(2019{\natexlab{a}})}]{maximov19}
Maximov D (2019{\natexlab{a}}) Control in a group of unmanned aerial vehicles
  based on multi-valued logic. In: Proceedings of the 12th International
  Conference 'Management of Large-Scale System Development' (MLSD'2019), IEEE,
  Providence, pp 1--5,
  \urlprefix\url{https://ieeexplore.ieee.org/stamp/stamp.jsp?tp=&arnumber=8911092}

\bibitem[{Maximov(2019{\natexlab{b}})}]{assa}
Maximov D (2019{\natexlab{b}}) An optimal itinerary generation in a
  configuration space of large intellectual agent groups with linear logic.
  Advances in Systems Science and Applications 19(4):79--86,
  \urlprefix\url{https://ijassa.ipu.ru/index.php/ijassa/article/view/829/513}

\bibitem[{Maximov(2020{\natexlab{a}})}]{maximov20}
Maximov D (2020{\natexlab{a}}) Making a decision on the management of a group
  of unmanned aerial vehicles by using multi-valued networks, \emph{in
  Russian}. In: Materials of the 13th International Conference 'Management of
  Large-Scale System Development' (MLSD'2020), Trapeznikov Institute of Control
  Science Russian Academy of Science, Moscow, pp 1074--1082,
  \urlprefix\url{https://mlsd2020.ipu.ru/proceedings/1074-1082.pdf}

\bibitem[{Maximov(2020{\natexlab{b}})}]{maximov20e}
Maximov D (2020{\natexlab{b}}) Multi-valued neural networks and their use in
  decision making on the management of a group of unmanned vehicles. In:
  Proceedings of the 2020 13th International Conference 'Management of
  Large-Scale System Development' (MLSD), IEEE, Providence, pp 1--5,
  \doi{10.1109/MLSD49919.2020.9247800},
  \urlprefix\url{https://ieeexplore.ieee.org/document/9247800}

\bibitem[{Maximov(2020{\natexlab{c}})}]{Maximov_neuro2}
Maximov D (2020{\natexlab{c}}) Multi-valued neural networks {II}: A robot group
  control. Advances in system science and applications (4)

\bibitem[{Maximov and Ryvkin(2017)}]{Maximov_R}
Maximov D, Ryvkin S (2017) Systems smart effects as the consequence of the
  systems complexity. In: Proc. 17th International Conf. on Smart Technologies
  (IEEE EUROCON 2017, Ohrid), IEEE, Ohrid, pp 576--582

\bibitem[{Maximov and Ryvkin(2019)}]{Maximov_an}
Maximov D, Ryvkin S (2019) Multi-valued logic in graph transformation theory
  and self-adaptive systems. Annals of Mathematics and Artificial Intelligence
  87(4):395--408

\bibitem[{Maximov et~al.(2017)Maximov, Legovich, and Ryvkin}]{Maximov_17}
Maximov DY, Legovich YS, Ryvkin S (2017) How the structure of system problems
  influences system behavior. Automation and Remote Control 78(4):689--699

\bibitem[{Meda-Campaca(2018)}]{er}
Meda-Campaca JA (2018) On the estimation and control of nonlinear systems with
  parametric uncertainties and noisy outputs. IEEE Access 6:31968--31973,
  \doi{10.1109/ACCESS.2018.2846483}

\bibitem[{Novak(1982)}]{novak}
Novak V (1982) First-order fuzzy logic. Studia Logica 46(1):87--109

\bibitem[{Sussner and Valle(2006)}]{sussner}
Sussner P, Valle ME (2006) Implicative fuzzy associative memories. IEEE
  Transactions on Fuzzy Systems 14(6):793--807

\bibitem[{Volkov et~al.(2017)Volkov, Polous, Goncharenko, and
  Ikonnikov}]{gonch}
Volkov AG, Polous AI, Goncharenko VI, Ikonnikov MA (2017) Synthesis of an
  algorithm for computing estimates adapted to a neural network logical basis
  \emph{in Russian}. Neurocomputers: development, application (3):34--41

\bibitem[{Xu et~al.(2003)Xu, Ruan, Qin, and Liu}]{latvallog}
Xu Y, Ruan D, Qin K, Liu J (2003) Lattice-Valued Logic: An Alternative Approach
  to Treat Fuzziness and Incomparability. Springer-Verlag, Heidelberg

\bibitem[{Xu et~al.(2010)Xu, Liu, Martinez, and Ruan}]{liter}
Xu Y, Liu J, Martinez L, Ruan D (2010) Some views on information fusion and
  logic based approaches in decision making under uncertainty. Journal of
  Universal Computer Science 16(1):3--21

\end{thebibliography}





\end{document}